\journalname{Neuroinformatics}
\begin{document}


\title{SegAN: Adversarial Network with Multi-scale $L_{1}$ Loss for Medical Image Segmentation\thanks{Yuan Xue and Tao Xu are Co-first Authors.}}


\author{Yuan Xue \and Tao Xu \and Han Zhang \and L. Rodney Long \and Xiaolei Huang }

\institute{Yuan Xue \and Tao Xu \and Xiaolei Huang \at
              Department of Computer Science and Engineering, Lehigh University, Bethlehem, PA, USA \\
              \email{\{yux715, tax313, xih206\}@lehigh.edu}           
           \and
           Han Zhang \at
              Department of Computer Science, Rutgers University, Piscataway, NJ, USA \\
              \email{han.zhang@cs.rutgers.edu}
           \and
            L. Rodney Long \at
              National Library of Medicine, National Institutes of Health, Bethesda, MD, USA \\
              \email{rlong@mail.nih.gov}
}

\date{Received: date / Accepted: date}

\maketitle
\begin{abstract}
Inspired by classic generative adversarial networks (GAN), we propose a novel end-to-end adversarial neural network, called SegAN, for the task of medical image segmentation.
Since image segmentation requires dense, pixel-level labeling, the single scalar real/fake output of a classic GAN's discriminator may be ineffective in producing stable and sufficient gradient feedback to the networks. Instead, we use a fully convolutional neural network as the segmentor to generate segmentation label maps, and propose a novel adversarial critic network with a multi-scale $L_{1}$ loss function to force the critic and segmentor to learn both global and local features that capture long- and short-range spatial relationships between pixels. In our SegAN framework, the segmentor and critic networks are trained in an alternating fashion in a min-max game:   The critic takes as input a pair of images, (original\_image $*$ predicted\_label\_map, original\_image $*$ ground\_truth\_label\_map), and then is trained by maximizing a multi-scale loss function; The segmentor is trained with only gradients passed along by the critic, with the aim to minimize the multi-scale loss function. We show that such a SegAN framework is more effective and stable for the segmentation task, and it leads to better performance than the state-of-the-art U-net segmentation method. We tested our SegAN method using datasets from the MICCAI BRATS brain tumor segmentation challenge. Extensive experimental results demonstrate the effectiveness of the proposed SegAN with multi-scale loss: on BRATS 2013 SegAN gives performance comparable to the state-of-the-art for whole tumor and tumor core segmentation while achieves better precision and sensitivity for Gd-enhance tumor core segmentation; on BRATS 2015 SegAN achieves better performance than the state-of-the-art in both dice score and precision.

\end{abstract}

%
\section{Introduction} \label{intro}
Advances in a wide range of medical imaging technologies have revolutionized how we view functional and pathological events in the body and define anatomical structures in which these events take place. X-ray, CAT, MRI, Ultrasound, nuclear medicine, among other medical imaging technologies, enable 2D or tomographic 3D images to capture in-vivo structural and functional information inside the body for diagnosis, prognosis, treatment planning and other purposes.


One fundamental problem in medical image analysis is image segmentation, which identifies the boundaries of objects such as organs or abnormal regions (e.g. tumors) in images. Since manually annotation can be very time-consuming and subjective, an accurate and reliable automatic segmentation method is valuable for both clinical and research purpose. Having the segmentation result makes it possible for shape analysis, detecting volume change, and making a precise radiation therapy treatment plan.

In
the literature of image processing and computer vision, various theoretical frameworks have been proposed for automatic segmentation. Traditional unsupervised methods such as thresholding~\cite{otsu1979threshold}, region growing~\cite{adams1994seeded}, edge detection and grouping~\cite{canny1986computational}, Markov Random Fields (MRFs)~\cite{manjunath1991unsupervised}, active contour models~\cite{kass1988snakes}, Mumford-Shah functional based frame partition~\cite{mumford1989optimal}, level sets~\cite{malladi1995shape}, graph cut~\cite{shi2000normalized}, mean shift~\cite{comaniciu2002mean}, and their extensions and integrations~\cite{gooya2011deformable, lee2008segmenting, lefohn2003interactive} usually utilize constraints about image intensity or object appearance.  Supervised methods~\cite{menze2015multimodal,cobzas20073d, geremia2011spatial, wels2008discriminative,ronneberger2015u, havaei2017brain}, on the other hand, directly learn from labeled training samples, extract features and context information in order to perform a dense pixel (or voxel)-wise classification.

Convolutional Neural Networks (CNNs) have been widely applied to visual recognition problems in recent years, and they are shown effective in learning a hierarchy of features at multiple scales from data. For pixel-wise semantic segmentation, CNNs have also achieved remarkable success.  In~\cite{long2015fully}, Long \emph{et al.} first proposed a fully convolutional networks (FCNs) for semantic segmentation.  The authors replaced conventional fully connected layers in CNNs with convolutional layers to obtain a coarse label map, and then upsampled the label map with deconvolutional layers to get per pixel classification results.  Noh \emph{et al.}~\cite{noh2015learning} used an encoder-decoder structure to get more fine details about segmented objects.
With multiple unpooling and deconvolutional layers in their architecture, they avoided the coarse-to-fine stage in~\cite{long2015fully}. However, they still needed to ensemble with FCNs in their method to capture local dependencies between labels.  Lin \emph{et al.}~\cite{lin2016efficient} combined Conditional Random Fields (CRFs) and CNNs to better explore spatial correlations between pixels, but they also needed to implement a dense CRF to refine their CNN output.

In the field of medical image segmentation, deep CNNs have also been applied with promising results.  Ronneberger \emph{et al.}~\cite{ronneberger2015u} presented a FCN, namely U-net, for segmenting neuronal structures in electron microscopic stacks.  With the idea of skip-connection from~\cite{long2015fully}, the U-net achieved very good performance and has since been applied to many different tasks such as image translation~\cite{isola2016image}.
In addition, Havaei \emph{et al.}~\cite{havaei2017brain} obtained good performance for medical image segmentation with their InputCascadeCNN. The InputCascadeCNN has image patches as inputs and uses a cascade of CNNs in which the output probabilities of a first-stage CNN are taken as additional inputs to a second-stage CNN.
Pereira \emph{et al.}~\cite{pereira2016brain} applied deep CNNs with small kernels for brain tumor segmentation.
They proposed different architectures for segmenting high grade and low grade tumors, respectively.
Kamnitsas \emph{et al.}~\cite{kamnitsas2017efficient} proposed a 3D CNN using two pathways with inputs of different resolutions. 3D CRFs were also needed to refine their results.

Although these previous approaches using CNNs for segmentation have achieved promising results, they still have limitations.  All above methods utilize a pixel-wise loss, such as softmax, in the last layer of their networks, which is insufficient to learn both local and global contextual relations between pixels.  Hence they always need models such as CRFs ~\cite{chen14semantic} as a refinement to enforce spatial contiguity in the output label maps. Many previous methods \cite{havaei2017brain,kamnitsas2017efficient,pereira2016brain} address this issue by training CNNs on image patches and using multi-scale, multi-path CNNs with different input resolutions or different CNN architectures. Using patches and multi-scale inputs could capture spatial context information to some extent.
Nevertheless, as described in U-net~\cite{ronneberger2015u}, the computational cost for patch training is very high and there is a trade-off between localization accuracy and the patch size.
Instead of training on small image patches, current state-of-the-art CNN architectures  such as U-net are trained on whole images or large image patches and use skip connections to combine hierarchical features for generating the label map. They have shown potential to implicitly learn some local dependencies between pixels. However, these methods are still limited by their pixel-wise loss function, which lacks the ability to enforce the learning of multi-scale spatial constraints directly in the end-to-end training process.
Compared with patch training, an issue for CNNs trained on entire images is label or class imbalance. While patch training methods can sample a balanced number of patches from each class, the numbers of pixels belonging to different classes in full-image training methods are usually imbalanced. To mitigate this problem, U-net uses a weighted cross-entropy loss to balance the class frequencies. However, the choice of weights in their loss function is task-specific and is hard to optimize. In contract to the weighted loss in U-net, a general loss that could avoid class imbalance as well as extra hyper-parameters would be more desirable.

In this paper, we propose a novel end-to-end Adversarial Network architecture, called SegAN, with a multi-scale $L_1$ loss function, for semantic segmentation. Inspired by the original GAN~\cite{goodfellow2014generative}, the training procedure for SegAN is similar to a two-player min-max game in which a segmentor network ($S$) and a critic network ($C$) are trained in an alternating fashion to respectively minimize and maximize an objective function. However, there are several major differences between our SegAN and the original GAN that make SegAN significantly better for the task of image segmentation.
\begin{itemize}
\vspace{-6pt}
\item
In contrast to classic GAN with separate losses for generator and discriminator, we propose a novel multi-scale loss function for both segmentor and critic.
Our critic is trained to maximize a novel multi-scale $L_{1}$ objective function that takes into account CNN feature differences between the predicted segmentation and the ground truth segmentation at multiple scales (i.e. at multiple layers).
\item We use a fully convolutional neural network (FCN) as the segmentor $S$, which is trained with only gradients flowing through the critic, and with the objective of minimizing the same loss function as for the critic. 
\item Our SegAN is an end-to-end architecture trained on whole images, with no requirements for patches, or inputs of multiple resolutions, or further smoothing of the predicted label maps using CRFs.
\end{itemize}

By training the entire system end-to-end with back propagation and alternating the optimization of $S$ and $C$, SegAN can directly learn spatial pixel dependencies at multiple scales. Compared with previous methods that learn hierarchical features with multi-scale multi-path CNNs~\cite{farabet2013learning}, our SegAN network applies a novel multi-scale loss to enforce the learning of hierarchical features in a more straightforward and efficient manner.
Extensive experimental results demonstrate that the proposed SegAN achieves comparable or better results than the state-of-the-art CNN-based architectures including U-net.

The rest of this paper is organized as follows. Section \ref{methods} introduces our SegAN architecture and methodology. Experimental results are presented in Section 3. Finally, we conclude this paper in Section 4.

\section{Methodology} \label{methods}

    \begin{figure*}[tb]
    \centerline{
    \includegraphics[width=0.99\textwidth,height = 350pt]{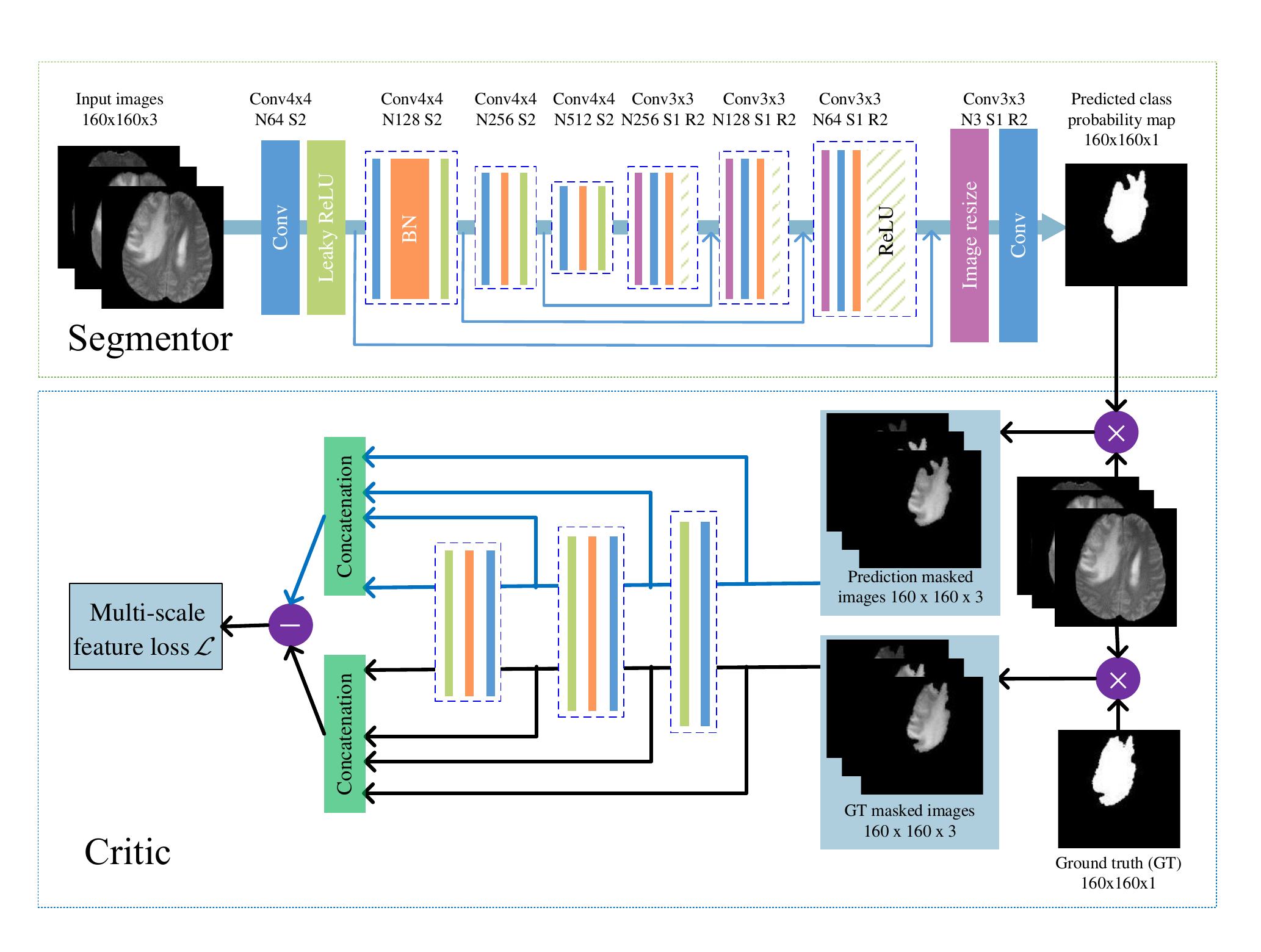}}
    \caption{The architecture of the proposed SegAN with segmentor and critic networks. $4 \times 4$ convolutional layers with stride $2$ (S2) and the corresponding number of feature maps (e.g., N64) are used for encoding, while image resize layers with a factor of $2$ (R2) and $3 \times 3$ convolutional layers with stride $1$ are used for decoding. Masked images are calculated by pixel-wise multiplication of a label map and (the multiple channels of) an input image. Note that, although only one label map (for whole tumor segmentation) is illustrated here, multiple label maps (e.g. also for tumor core and Gd-enhanced tumor core) can be generated by the segmentor in one path. }
    \label{fig:architecture}
		\end{figure*}

As illustrated in Figure~\ref{fig:architecture}, the proposed SegAN consists of two parts: the segmentor network $S$ and the critic network $C$.
The segmentor is a fully convolutional encoder-decoder network that generates a probability label map from input images.
The critic network is fed with two inputs: original images masked by ground truth label maps, and original images masked by predicted label maps from $S$.
The $S$ and $C$ networks are alternately trained in an adversarial fashion: the training of $S$ aims to minimize our proposed multi-scale $L_1$ loss, while the training of $C$ aims to maximize the same loss function.

\vspace{-8pt}
\subsection{The multi-scale $L_1$ loss}

The conventional GANs \cite{goodfellow2014generative} have an objective loss function defined as:
\begin{equation}
\begin{split}
\min_{\theta_G} &\max_{\theta_D} \mathcal{L}(\theta_G,\theta_D)\\
&= \mathbb{E}_{x\sim P_\mathrm{data}}[\log D(x)] + \mathbb{E}_{z\sim P_z}\log (1 - D(G(z)))] \enspace .
\end{split}
\end{equation}

In this objective function, $x$ is the real image from an unknown distribution $P_\mathrm{data}$, and $z$ is a random input for the generator, drawn from a probability distribution (such as Gaussion) $P_z$. $\theta_G$ and $\theta_D$ represent the parameters for the generator and discriminator in GAN, respectively.

In our proposed SegAN, given a dataset with $N$ training images $x_n$ and corresponding ground truth label maps $y_n$, the multi-scale objective loss function $\mathcal{L}$ is defined as:
\begin{equation}
\begin{split}
\min\limits_{\theta_S}  &\max\limits_{\theta_C}\mathcal{L}(\theta_S,\theta_C) \\
&= \frac{1}{N}\sum_{n=1}^N
\ell_\mathrm{mae}(f_C(x_n \circ S(x_n)), f_C(x_n \circ y_n)) \enspace ,
\label{eq:SegGANloss}
\end{split}
\end{equation}
where $\ell_\mathrm{mae}$ is the Mean Absolute Error (MAE) or $L_1$ distance;
$x_n \circ S(x_n)$ is the input image masked by segmentor-predicted label map (i.e., pixel-wise multiplication of predicted\_label\_map and original\_image); $x_n \circ y_n$ is the input image masked by its ground truth label map (i.e., pixel-wise multiplication of ground\_truth\_label\_map and original\_image); and $f_C(x)$ represents the hierarchical features extracted from image $x$ by the critic network. More specifically, the $\ell_\mathrm{mae}$ function is defined as:
\begin{equation}
\ell_\mathrm{mae}(f_C({x}), f_C({x}^{\prime})) = \frac{1}{L}\sum_{i=1}^L ||f_C^i({x}) - f_C^i({x}^{\prime})||_{1} \enspace ,\label{eq:mae}
\end{equation}
where $L$ is the total number of layers (i.e. scales) in the critic network, and $f_C^i({x})$ is the extracted feature map of image ${x}$ at the $i$th layer of $C$.

\subsection{SegAN Architecture}

\textbf{Segmentor.} We use a fully convolutional encoder-decoder structure for the segmentor $S$ network.
We use the convolutional layer with kernel size $4 \times 4$ and stride $2$ for downsampling, and perform upsampling by image resize layer with a factor of $2$ and convolutional layer with kernel size $3 \times 3$ stride $1$.
We also follow the U-net and add skip connections between corresponding layers in the encoder and the decoder.

\noindent\textbf{Critic.}
The critic $C$ has the similar structure as the decoder in $S$.
Hierarchical features are extracted from multiple layers of $C$ and used to compute the multi-scale $L_1$ loss.
This loss can capture long- and short-range spatial relations between pixels by using these hierarchical features, i.e., pixel-level features, low-level (e.g. superpixels) features, and middle-level (e.g. patches) features.

More details including activation layers (e.g.,  leaky ReLU), batch normalization layer and the number of feature maps used in each convolutional layers can be found in Figure~\ref{fig:architecture}.

\subsection{Training SegAN}

The segmentor $S$ and critic $C$ in SegAN are trained by back-propagation from the proposed multi-scale $L_1$ loss.  In an alternating fashion, we first fix $S$ and  train $C$ for one step using gradients computed from the loss function, and then fix $C$ and train $S$ for one step using gradients computed from the same loss function passed to $S$ from $C$.
As shown in (\ref{eq:SegGANloss}), the training of $S$ and $C$ is like playing a min-max game: while $G$ aims to minimize the multi-scale feature loss, $C$ tries to maximize it. As training progresses, both the $S$ and $C$ networks become more and more powerful. And eventually, the segmentor will be able to produce predicted label maps that are very close to the ground truth as labeled by human experts. We also find that the $S$-predicted label maps are smoother and contain less noise than manually-obtained ground truth label maps.

We trained all networks using RMSProp solver  with  batch  size $64$ and learning rate $0.00002$. We used a grid search method to select the best values for the number of up-sampling blocks and the number of down-sampling blocks for the segmentor (four, in both cases), and for the number of down-sampling blocks for the critic (three).

\vspace{-25pt}
\subsection{Proof of training stability and convergence}
\vspace{-10pt}
Having introduced the multi-scale $L_1$ loss, we next prove that our training is stable and finally reaches an equilibrium. First, we introduce some notations.

Let $f:\mathcal{X} \rightarrow \mathcal{X}^{\prime}$ be the mapping between an input medical image and its corresponding ground truth segmentation, where $\mathcal{X}$ represents the compact space of medical images\footnote[1]{Although the pixel value ranges of medical images can vary, one can always normalize them to a certain value range such as [0,1], so it is compact.} and $\mathcal{X}^{\prime}$ represents the compact space of ground truth segmentations. We approximate this ground truth mapping $f$ with a segmentor neural network $g_\theta : \mathcal{X} \rightarrow \mathcal{X}^{\prime}$ parameterized by vector $\theta$ which takes an input image, and generates a segmentation result. Assume the best approximation to the ground truth mapping by a neural network is the network $g_{\hat{\theta}}$ with optimal parameter vector $\hat{\theta}$.

Second, we introduce a lemma about the Lipschitz continuity of either the segmentor or the critic neural network in our framework.
\begin{lemma}\label{lem1}
Let $g_\theta$ be a neural network parameterized by $\theta$, and $x$ be some input in space $\mathcal{X}$, then $g_\theta$ is Lipschitz continuous with a bounded Lipschitz constants $K(\theta)$ such that
\begin{equation}||g_{\theta}(x_1) - g_{\theta}(x_2)||_1 \leqslant K(\theta)(||x_1 - x_2||_1) \enspace ,
\label{eq:lemma1_1}
\end{equation}
and for different parameters with same input we have
\begin{equation}
||g_{\theta_1}(x) - g_{\theta_2}(x)||_1 \leqslant K(x)||\theta_1 - \theta_2||_1
\enspace ,
\label{eq:lemma1_2}
\end{equation}
\end{lemma}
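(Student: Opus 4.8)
The plan is to exploit the fact that $g_\theta$ is a finite composition of simple building blocks --- affine maps $x \mapsto W_i x + b_i$, elementwise nonlinearities (leaky ReLU), and batch-normalization layers --- each of which is itself Lipschitz. Since a composition of Lipschitz maps is Lipschitz with constant equal to the product of the individual constants, both inequalities should follow once I establish Lipschitzness of the individual layers and control the relevant constants. Writing $g_\theta = \phi_L \circ \cdots \circ \phi_1$ and handling the two inequalities separately keeps the bookkeeping clean.

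For the first inequality (\ref{eq:lemma1_1}), I would treat each layer $\phi_i$ as an affine part post-composed with an activation $\sigma$. The affine part $x \mapsto W_i x + b_i$ is Lipschitz in $x$ with constant $\|W_i\|$ (the operator norm induced by the $1$-norm, since the paper measures distances in $L_1$), and leaky ReLU is $1$-Lipschitz because its slope is bounded by $1$. Hence each layer has Lipschitz constant $\|W_i\|$, and the whole network is Lipschitz with $K(\theta) = \prod_{i=1}^L \|W_i\|$, which is finite since $\theta$ is a fixed finite-dimensional vector. This gives (\ref{eq:lemma1_1}) directly.

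For the second inequality (\ref{eq:lemma1_2}), I would fix the input $x$ and view the output as a function of $\theta = (W_1,b_1,\ldots,W_L,b_L)$. The cleanest route is a telescoping / hybrid argument: interpolate between $\theta_1$ and $\theta_2$ by swapping the parameters one layer at a time, so that $g_{\theta_1}(x) - g_{\theta_2}(x)$ decomposes into a sum of differences, each perturbing only a single layer's weights. For the perturbed layer, the change in its output is linear in the weight difference and is controlled by the norm of the intermediate activation feeding into it; that activation is bounded because $x$ lies in the compact space $\mathcal{X}$ and the preceding layers have bounded parameters. The perturbation is then propagated to the output through the remaining layers, whose Lipschitz constants are bounded by the first part. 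Collecting terms yields a bound of the form $K(x)\,\|\theta_1 - \theta_2\|_1$, where $K(x)$ absorbs the intermediate-activation norms and the downstream layer constants.

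The main obstacle will be making the constants genuinely \emph{bounded} rather than merely finite for each fixed argument. For (\ref{eq:lemma1_1}) this is automatic given a fixed $\theta$, but the telescoping bound in (\ref{eq:lemma1_2}) requires the intermediate activations to be uniformly bounded, which in turn requires the parameter vectors to range over a bounded set. I would therefore lean on the compactness of $\mathcal{X}$ together with the assumption --- standard in this Wasserstein-style adversarial training setting, and typically enforced by weight clipping --- that the parameters live in a compact region. Under that assumption the suprema defining $K(\theta)$ and $K(x)$ are attained and finite, which closes the argument.
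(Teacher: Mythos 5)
Your proposal is correct and follows essentially the same route as the paper: both inequalities are obtained by decomposing the network layer by layer, bounding the input-Lipschitz constant by a product of per-layer weight norms, and bounding the parameter-Lipschitz constant by a sum over layers in which the perturbation at layer $i$ is scaled by the intermediate activation $g^{i-1}(x)$ and propagated through the downstream layers. Your telescoping argument is just the integrated (and arguably cleaner, since leaky ReLU is not differentiable at the origin) form of the paper's gradient identities $\nabla_{x}g_{\theta}(x)=\prod_{k}W_kD_k$ and $\nabla_{\theta_i}g_{\theta}(x)=(\prod_{k>i}W_kD_k)D_ig^{i-1}(x)$, and your closing remark about needing compact parameter and input spaces matches how the paper later invokes weight clipping in Theorem~\ref{thm:thm1}.
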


Now we prove Lemma~\ref{lem1}.
\begin{proof}
Note that the neural network consists of several affine transformations and pointwise nonlinear activation functions such as leaky ReLU (see Figure~\ref{fig:architecture}). All these functions are Lipschitz continuous because all their gradient magnitudes are within certain ranges. To prove Lemma~\ref{lem1}, it's equivalent to prove the gradient magnitudes of $g_\theta$ with respect to $x$ and $\theta$ are bounded. We start with a neural network with only one layer: $g_\theta(x) = A_1(W_1x)$ where $A_1$ and $W_1$ represent the activation and weight matrix in the first layer. We have $\nabla_{x}g_{\theta}(x) = W_1 D_1$ where $D_1$ is the diagonal Jacobian of the activation, and we have
$\nabla_{\theta} g_{\theta}(x) = D_1 x$ where $\theta$ represents the parameters in the first layer.

Then we consider the neural network with $L$ layers.
We apply the chain rule of the gradient and we have $\nabla_{x}g_{\theta}(x) = \prod_{k=1}^{L}W_k D_k$ where $k$ represent the $k$-th layer of the network. Then we have
\begin{equation}
\begin{split}
||\nabla_{x} g_{\theta}(x)||_1 = ||\prod_{k=1}^{L}W_k D_k||_1\enspace .
\end{split}
\end{equation}
Due to the fact that all parameters and inputs are bounded, we have proved (\ref{eq:lemma1_1}).

Let's denote the first $i$ layers of the neural network by $g^i$ (which is another neural network with less layers), we can compute the gradient with respect to the parameters in $i$-th layer as $\nabla_{\theta_i} g_{\theta}(x) = (\prod_{k=i+1}^{L}W_k D_k)D_ig^{i-1}(x)$. Then we sum parameters in all layers and get
\begin{equation}
\begin{split}
||\nabla_{\theta} g_{\theta}(x)||_1 &= ||\sum_{i=1}^{L}(\prod_{k=i+1}^{L}W_k D_k)D_ig^{i-1}(x)||_1\\
&\leqslant \sum_{i=1}^{L}||((\prod_{k=i+1}^{L}W_k D_k)D_i)g^{i-1}(x)||_1\enspace .
\end{split}
\end{equation}
Since we have proved that $g(x)$ is bounded, we finish the proof of (\ref{eq:lemma1_2}).

\end{proof}

Based on Lemma~\ref{lem1}, we then prove that our multi-scale loss is bounded and won't become arbitrarily large during the training, and it will finally converge.

\begin{theorem}
Let $\mathcal{L}_t(x)$ denote the multi-scale loss of our SegAN at training time $t$ for input image $x$, then there exists a small constant $C$ so that
\begin{equation}
\lim_{t\rightarrow + \infty} \mathbb{E}_{x \in \mathcal{X}} \mathcal{L}_t(x) \leqslant C\enspace .
\end{equation}
\label{thm:thm1}
\end{theorem}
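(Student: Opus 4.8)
The plan is to bound the per-image loss by the product of the critic's Lipschitz constant and the segmentor's residual error, and then argue that this residual shrinks to the irreducible approximation error of the best network $g_{\hat\theta}$. First I would unfold $\mathcal{L}_t(x)$ from (\ref{eq:SegGANloss}) and (\ref{eq:mae}) and apply the input-Lipschitz bound (\ref{eq:lemma1_1}) of Lemma~\ref{lem1} to each partial critic network $f_C^i$ (the first $i$ layers of $C$ are themselves a network to which the lemma applies). Writing $\bar{K}_t$ for the average over scales of the per-layer critic Lipschitz constants at time $t$, and $S_t(x)$ for the segmentor output at time $t$, this gives
\[
\mathcal{L}_t(x) \leqslant \bar{K}_t \, \| x \circ S_t(x) - x \circ y \|_1 = \bar{K}_t \, \| x \circ (S_t(x) - y) \|_1 .
\]
Because the footnote lets us assume every pixel of $x$ lies in $[0,1]$, pixel-wise masking cannot increase the $L_1$ norm, so $\| x \circ (S_t(x) - y) \|_1 \leqslant \| S_t(x) - y \|_1$. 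Lemma~\ref{lem1} further guarantees $\bar{K}_t$ stays bounded as long as the critic weights stay bounded, which I would carry as a standing assumption, consistent with the bounded-parameter hypothesis already used to prove Lemma~\ref{lem1}.

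Next I would compare the current segmentor to the best achievable network $g_{\hat\theta}$ via the triangle inequality,
\[
\| S_t(x) - y \|_1 \leqslant \| g_{\theta_{S_t}}(x) - g_{\hat\theta}(x) \|_1 + \| g_{\hat\theta}(x) - f(x) \|_1 ,
\]
where the last term is the irreducible approximation error $\epsilon(x)$, small by the definition of $\hat\theta$ as the optimal parameter vector. For the first term I would invoke the parameter-Lipschitz bound (\ref{eq:lemma1_2}), namely $\| g_{\theta_{S_t}}(x) - g_{\hat\theta}(x) \|_1 \leqslant K(x)\,\| \theta_{S_t} - \hat\theta \|_1$. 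Taking expectation over $x \in \mathcal{X}$ and the limit $t \to \infty$ then reduces the theorem to controlling $\| \theta_{S_t} - \hat\theta \|_1$, with the target constant emerging as $C = \bar{K}\,\mathbb{E}_{x}[\epsilon(x)]$ plus the (vanishing) contribution of the parameter gap, where $\bar{K}$ is the standing bound on the critic Lipschitz constant.

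The hard part will be this final training-dynamics step: establishing that the alternating min-max optimization actually drives $\theta_{S_t}$ toward $\hat\theta$, or at least keeps $\| \theta_{S_t} - \hat\theta \|_1$ bounded while the gradient the critic passes to $S$ vanishes near equilibrium. Everything preceding is Lipschitz bookkeeping from Lemma~\ref{lem1} together with compactness of $\mathcal{X}$ and $\mathcal{X}'$. For the dynamics I would lean on the WGAN-style observation that, because both the loss and its gradients are Lipschitz and hence bounded (unlike the saturating $\log$-loss of the vanilla GAN), the updates cannot make the loss explode: the minimizing player can always fall back on the loss attained by $g_{\hat\theta}$, so the min side is bounded above by $\bar{K}\,\mathbb{E}_{x}[\epsilon(x)]$, while the bounded critic Lipschitz constant caps the max side. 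Converting this non-explosion-and-boundedness picture into a genuine limit is the delicate point, since a fully rigorous convergence guarantee for non-convex deep min-max training is unavailable; I would therefore present the bound as conditional on the optimizer settling near equilibrium, carrying the bounded-Lipschitz assumption throughout.
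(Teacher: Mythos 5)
Your proposal follows essentially the same route as the paper's proof: bound the multi-scale loss by a uniform critic Lipschitz constant times the segmentor's output residual via Lemma~\ref{lem1}, reduce that residual to $\|\theta - \hat{\theta}\|_1$ with the parameter-Lipschitz bound and compactness of $\mathcal{X}$, and then appeal to the training dynamics settling near a (local) optimum. The only substantive differences are that the paper obtains the uniform critic constant $K$ concretely by WGAN-style weight clipping rather than as a standing assumption, identifies the ground truth directly with $g_{\hat{\theta}}$ and drops the masking ``without loss of generality'' (so your masking bound and triangle-inequality approximation-error term do not appear, and its final constant is $KK'\|\hat{\theta}-\theta_0\|_1$ for a reachable local optimum $\theta_0$), and asserts the convergence step you rightly flag as the delicate point with no more rigor than you offer.
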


\begin{proof}
Let $g$ and $d$ represent the segmentor and critic neural network, $\theta$ and $w$ be the parameter vector for the segmentor and critic, respectively. Without loss of generality, we omit the masked input for the critic and rephrase (\ref{eq:SegGANloss}) and (\ref{eq:mae}) as
\begin{equation}
\min_{\theta} \max_{w} \mathcal{L}_t = \mathbb{E}_{x \in \mathcal{X}} \frac{1}{L}\sum_{i=1}^L ||d^i(g_{\theta}({x})) - d^i(g_{\hat{\theta}}({x}))||_{1}\enspace ,
\label{eq9}
\end{equation}
recall that $g_{\hat{\theta}}$ is the ground truth segmentor network and $d^i$ is the critic network with only first $i$ layers. Let's firstly focus on the critic. To make sure our multi-scale loss won't become arbitrarily large, inspired by~\cite{arjovsky2017wasserstein}, we clamp the weights of our critic network to some certain range (e.g., $[-0.01,0.01]$ for all dimensions of parameter) every time we update the weights through gradient descent. That is to say, we have a compact parameter space $\mathcal{W}$ such that all functions in the critic network are in a parameterized family of functions $\{d_w\}_{w \in \mathcal{W}}$. From Lemma~\ref{lem1}, we know that $||d_{w}(x_1) - d_{w}(x_2)||_1 \leqslant K(w)(||x_1 - x_2||_1)$. Due to the fact that $\mathcal{W}$ is compact, we can find a maximum value for $K(w)$, $K$, and we have
\begin{equation}
||d(x_1) - d(x_2)||_1 \leqslant K||x_1 - x_2||_1\enspace .
\end{equation}
Note that this constant $K$ only depends on the space $\mathcal{W}$ and is irrelevant to individual weights, so it is true for any parameter vector $w$ after we fix the vector space $\mathcal{W}$. Since Lemma~\ref{lem1} applies for the critic network with any number of layers, we have
\begin{equation}
\frac{1}{L}\sum_{i=1}^L||d^i (g_{\theta}(x)) - d^i (g_{\hat{\theta}}(x))||_1 \leqslant K||g_{\theta}(x) - g_{\hat{\theta}}(x)||_1\enspace .
\label{eq11}
\end{equation}

Now let's move to the segmentor. According to Lemma~\ref{lem1}, we have $||g_{\theta}(x) - g_{\hat{\theta}}(x)||_1 \leqslant K(x)||\theta - \hat{\theta}||_1$, then combined with (\ref{eq11}) we have
\begin{equation}
\frac{1}{L}\sum_{i=1}^L||d^i (g_{\theta}(x)) - d^i (g_{\hat{\theta}}(x))||_1 \leqslant K(x)K||\theta - \hat{\theta}||_1\enspace .
\label{eq12}
\end{equation}
We know $\mathcal{X}$ is compact, so there's a maximal value for $K(x)$ and it only depends on the difference between the ground truth parameter vector $\hat{\theta}$ and the parameter vector of the segmentor $\theta$. Since we don't update weights in the segmentor when we update weights in the critic, there's an upper bound for $\mathcal{L}_t$ when we update the critic network and it won't be arbitrarily large during the min-max game.

When we update the parameters in the segmentor, we want to decrease the loss. This makes sense because smaller loss indicates smaller difference between $\hat{\theta}$ and $\theta$. When $\theta \rightarrow \hat{\theta}$, $\mathcal{L}_t$ converges to zero because the upper bound of $\mathcal{L}$ becomes zero. However, we may not be able to find the global optimum for $\theta$. Now let us denote a reachable local optimum for $\theta$ in the segmentor by $\theta_0$, we will keep updating parameters in the segmentor through gradient descent and gradually approaches $\theta_0$. Based on (\ref{eq9}) and (\ref{eq12}), we denote the maximum of $K(x)$ by $K^\prime$ and have
\begin{equation}
\lim_{t\rightarrow + \infty} \mathcal{L}_t(x) \leqslant K K^{\prime}||\hat{\theta} - \theta_0||_1 = C\enspace .
\end{equation}
Since the constant $C$ does not depend on input $x$, we have proved Theorem~\ref{thm:thm1}.
\end{proof}

\section{Experiments}

In this section, we evaluated our system on the fully-annotated MICCAI BRATS datasets~\cite{menze2015multimodal}.  Specifically,
we trained and validated our models using the BRATS 2015 training dataset, which consists of $220$ high grade subjects and $54$ low grade subjects with four modalities: T1, T1c, T2 and Flair.  We randomly split the BRATS 2015 training data with the ratio $9 : 1$ into a training set and a validation set. We did such split for the high grade and low grade subjects separately, and then re-combined the resulting sets for training and validation.  Each subject in BRATS 2015 dataset is a 3D brain MRI volume with size $240 \times 240 \times 155$.  We center cropped each subject into a subvolume of $180 \times 180 \times 128$, to remove the border black regions while still keep the entire brain regions. We did our final evaluation and comparison on the BRATS 2015 test set using the BRATS online evaluation system, which has \emph{Dice}, \emph{Precision} and \emph{Sensitivity} as the evaluation metrics. The Dice score is is identical to the F-score which normalizes the number of true positives to the average size of the two segmented regions:
\begin{equation}
\mathrm{Dice} = \frac{2|P \cap T|}{|P|+|T|}
\end{equation}
where $P$ and $T$ represent the predicted region and the ground truth region, respectively.
Since the BRATS 2013 dataset is a subset of BRATS 2015, we also present our results on BRATS 2013 leaderboard set.

Although some work with 3D patching CNNs have been done for medical image segmentation, due to the limitation of our hardware memory and for the reason that brain images in BRATS dataset are inconsistent in third dimension, we built a 2D SegAN network to generate the label map for each axial slice of a 3D volume and then restack these 2D label maps to produce the 3D label map for brain tumor. Since each subject was center cropped to be a $180 \times 180 \times 128$ volume, it yields 128 axial slices each with the size $180 \times 180$. These axial slices were further randomly cropped to size $160 \times 160$ during training for the purpose of data augmentation. They were centered cropped to size $160 \times 160$ during validation and testing.

We used three modalities of these MRI images: T1c, T2, FLAIR. Corresponding slices of T1c, T2, FLAIR modalities are concatenated along the channel dimension and used as the multi-channel input to our SegAN model, as shown in  Figure~\ref{fig:architecture}.
The segmentor of SegAN outputs label maps with the same dimensions as the input images.
As required by the BRATS challenge~\cite{menze2015multimodal}, we did experiments with the objective to generate label maps for three types of tumor regions: \emph{whole tumor}, \emph{tumor core} and \emph{Gd-enhanced tumor core}.

\begin{figure}[!htbp]
  \centering
    \includegraphics[width=0.48\textwidth]{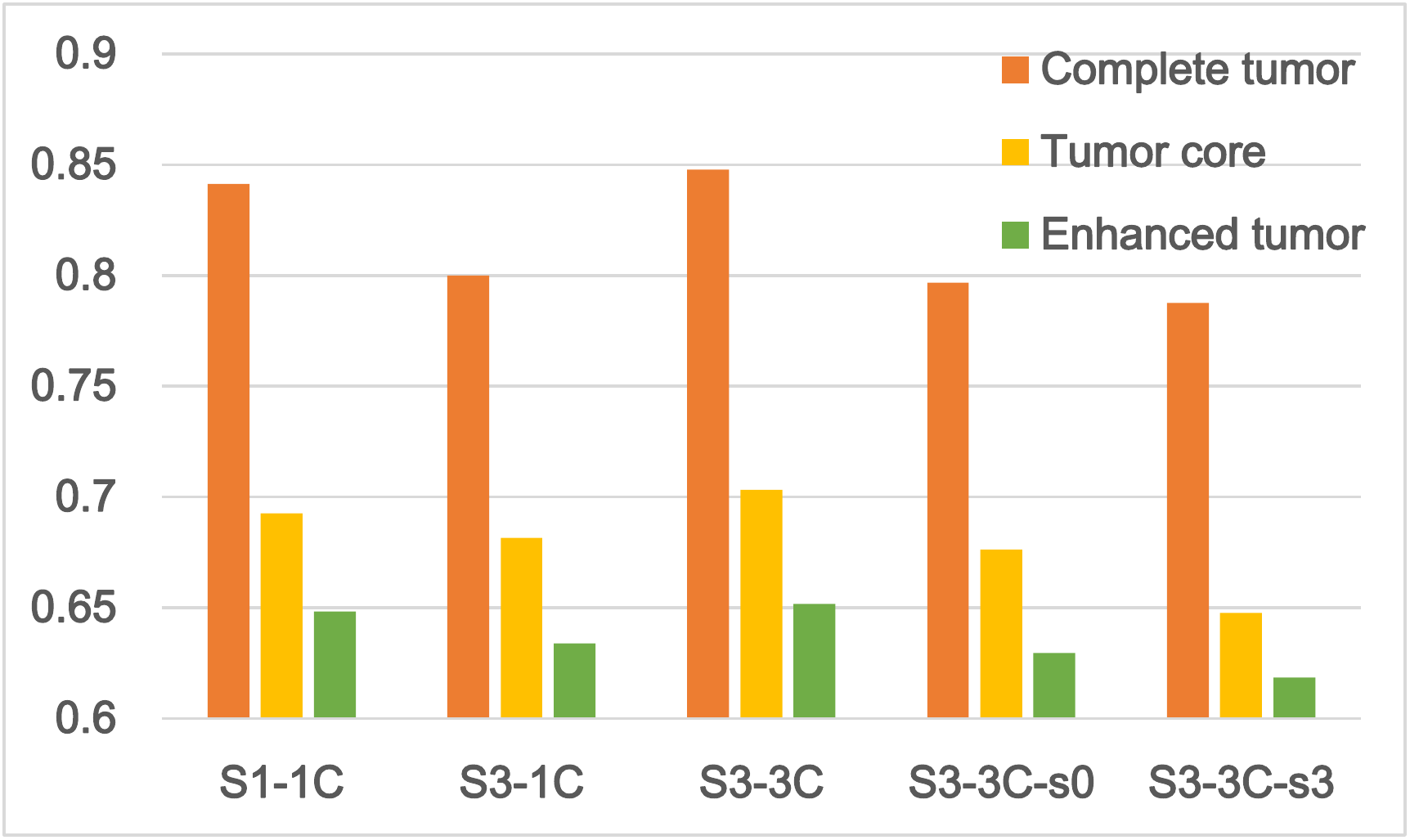}
    \caption{Average dice scores of different architectures on BRATS validation set}
		\label{fig:component}
\end{figure}
	
%

\subsection{Choice of components in SegAN architecture}

In this section, we compare different implementations of the proposed SegAN architecture and also evaluate the effectiveness of the proposed multi-scale $L_1$ loss on the BRATS validation set for the brain tumor segmentation task.
Specifically, we compare the following implementations:
\begin{itemize}
\vspace{-6pt}
\item \textbf{S1-1C.}  A separate SegAN is built for every label class, i.e. one segmentor (S1) and one critic (1C) per label.
\item \textbf{S3-1C:} A SegAN is built with one segmentor and one critic, but the segmentor generates a three-channel label map, one channel for each label. Therefore, each 3-channel label map produces three masked images (one for each class), which are then concatenated in the channel dimension and fed into the critic.
\item \textbf{S3-3C.} A SegAN is built with one segmentor that generates a three-channel (i.e. three-class) label map, but three separate critics, one for each label class. The networks, one $S$ and three $C$s, are then trained end-to-end using the average loss computed from all three $C$s.
\item \textbf{S3-3C single-scale loss models.} For comparison, we also built two single-scale loss models: S3-3C-s0 and S3-3C-s3. S3-3C-s0 computes the loss using features from only the input layers (i.e., layer $0$) of the critics, and S3-3C-s3 calculates the loss using features from only the output layers (i.e., layer $3$) of the critics.
\end{itemize}
As shown in Figure~\ref{fig:component}, models S1-1C and S3-3C give similar performance which is the best among all models. Since the computational cost for S1-1C is higher than S3-3C, S3-3C is more favorable and we use it to compare our SegAN model with other methods in Section \ref{sec:comparison}.
In contrast, while model S3-1C is the simplest requiring the least computational cost, it sacrifices some performance; but by using the multi-scale loss, it still performs better than any of the two single-scale loss models especially for segmenting tumor core and Gd-enhanced tumor core regions.

\begin{figure}
  \includegraphics[width=0.48\textwidth]{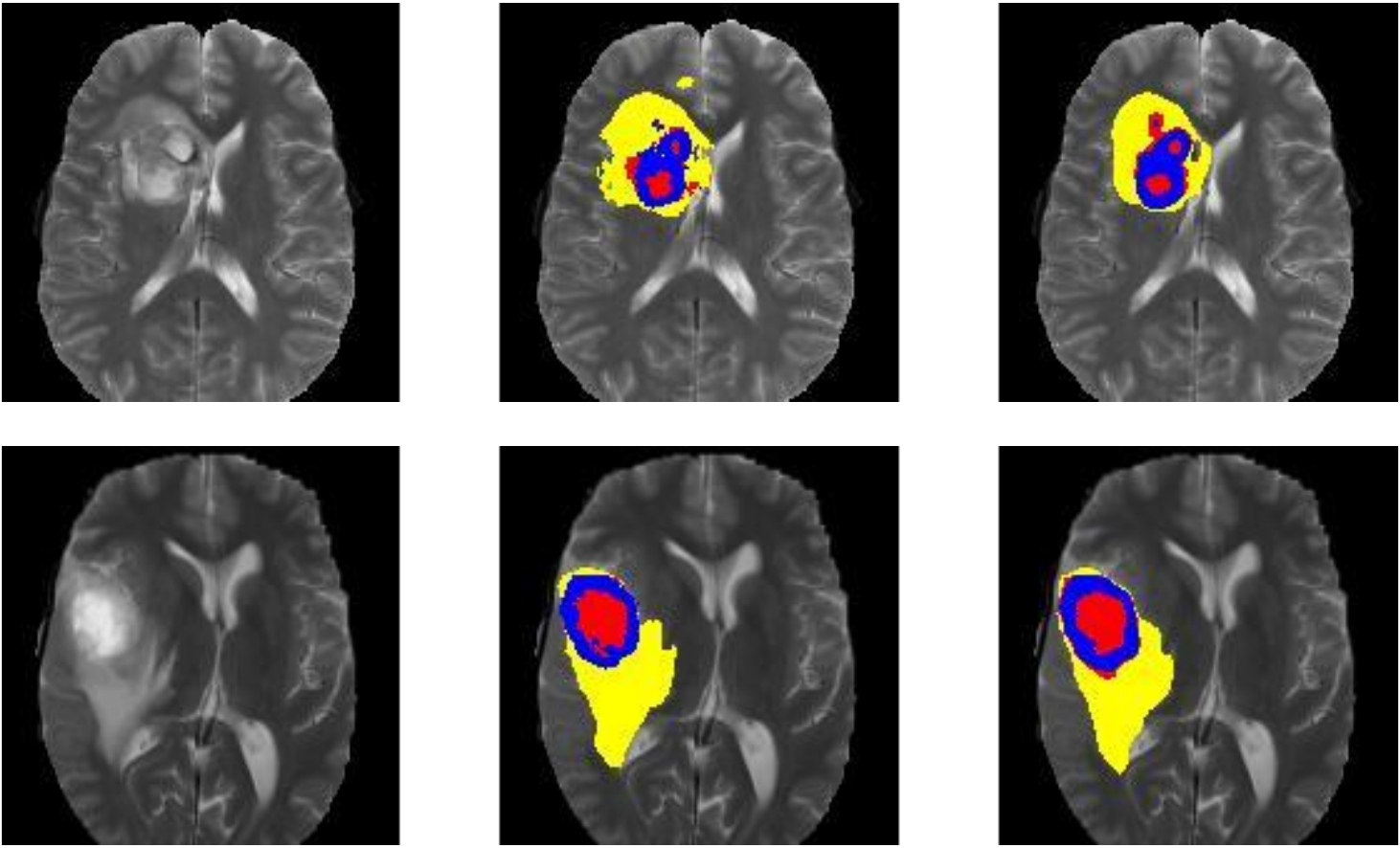}  
     \caption{Example results of our SegGAN (right) with corresponding T2 slices (left) and ground truth (middle) on BRATS validation set.}
 		\label{fig:examples}

 \label{sec:comparison}
\end{figure}

\begin{table*}[!htbp]
	\caption{Comparison to previous methods and a baseline implementation of U-net with softmax loss for segmenting three classes of brain tumor regions: whole, core and Gd-enhanced (Enha.)}
	\label{tb:result}
	\centering
	\begin{tabular}{c|c|c|c|c|c|c|c|c|c|c}	
		\cline{1-11}
\multirow{2}{2cm}{}&\multirow{2}{*}{Methods} & \multicolumn{3}{|c}{Dice} &
\multicolumn{3}{|c}{Precision} & \multicolumn{3}{|c}{Sensitivity} \\
		\cline{3-11}
		            & & Whole& Core & Enha. & Whole& Core & Enha. & Whole& Core & Enha.\\
		\hline					
\multirow{3}{1.8cm}{BRATS 2013 Leaderboard}
&Havaei~\cite{havaei2017brain} &
		   \textbf{0.84} &0.71 &0.57              &\textbf{0.88} &0.79 &0.54  &0.84 &0.72 &0.68 \\
		\cline{2-11}
&Pereira~\cite{pereira2016brain} &
			\textbf{0.84} &\textbf{0.72} &0.62     &0.85 &\textbf{0.82} &0.60  &\textbf{0.86} &\textbf{0.76} &0.68\\
		\cline{2-11}
&\textbf{SegAN} &
    	   \textbf{0.84} & 0.70 & \textbf{0.65}   &0.87 &0.80 &\textbf{0.68}  &0.83 & 0.74 &\textbf{0.72}\\
		\hline
		\hline

\multirow{3}{1.8cm}{BRATS 2015 Test} &Kamnitsas~\cite{kamnitsas2017efficient} &
            \textbf{0.85} &0.67 &{0.63}      &0.85 &\textbf{0.86} &0.63  &\textbf{0.88} &0.60 &\textbf{0.67}\\
        \cline{2-11}
&U-net &
         0.80 &0.63 &0.64             &0.83 &0.81 &\textbf{0.78}             &0.80 &0.58 &0.60\\
        \cline{2-11}
&\textbf{SegAN} &
    	   \textbf{0.85} &\textbf{0.70} &\textbf{0.66}  &\textbf{0.92} &0.80 &0.69 &0.80 &\textbf{0.65} &0.62\\

		\hline
	\end{tabular}
\end{table*}

\subsection{Comparison to state-of-the-art}
In this subsection, we compare the proposed method, our S3-3C SegAN model, with other state-of-the-art methods on the BRATS 2013 Leaderboard~\cite{havaei2017brain,pereira2016brain} Test and the BRATS 2015 Test~\cite{kamnitsas2017efficient}.
We also implemented a U-net model \cite{ronneberger2015u} for comparison. This U-net model has the exact same architecture as our SegAN segmentor except that the multi-scale SegAN loss is replaced with the softmax loss in the U-net. Table~\ref{tb:result} gives all comparison results. From the table, one can see that our SegAN compares favorably to the existing state-of-the-art on BRATS 2013 while achieves better performance on BRATS 2015.
Moreover, the dice scores of our SegAN outperform the U-net baseline for segmenting all three types of tumor regions.  Another observation is that our SegAN-produced label maps are smooth with little noise.  Figure~\ref{fig:examples} illustrates some example results of our SegAN; in the figure, the segmented regions of the three classes (whole tumor, tumor core, and Gd-enhanced tumor core) are shown in yellow, blue, and red, respectively. One possible reason behind this phenomenon is that the proposed multi-scale $L_1$ loss from our adversarial critic network encourages the segmentor to learn both global and local features that capture long- and short-range spatial relations between pixels, resulting fewer noises and smoother results.


\section{Discussion}

To the best of our knowledge, our proposed SegAN is the first GAN-inspired framework adapted specifically for the segmentation task that produces superior segmentation accuracy.
While conventional GANs have been successfully applied to many unsupervised learning tasks (e.g., image synthesis~\cite{zhang2016stackgan}) and semi-supervised classification~\cite{salimans2016improved}, there are very few works that apply adversarial learning to semantic segmentation. One such work that we found  by Luc \emph{et al.}~\cite{luc2016semantic} used both the conventional adversarial loss of GAN and pixel-wise softmax loss against ground truth.
They showed small but consistent gains on both the Stanford Background dataset and the PASCAL VOC 2012 dataset; the authors observed that pre-training only the adversarial network was unstable and suggested an alternating scheme for updating the segmenting network’s and the adversarial network’s weights.
We believe that the main reason contributing to the unstable training of their framework is: the conventional adversarial loss is based on a single scalar output by the discriminator that classifies a whole input image into real or fake category.
When inputs to the discriminator are generated \emph{vs.} ground truth dense pixel-wise label maps as in the segmentation task, the real/fake classification task is too easy for the discriminator and a trivial solution is found quickly. As a result, no sufficient gradients can flow through the discriminator to improve the training of generator.

In comparison, our SegAN uses a multi-scale feature loss that measures the difference between generated segmentation and ground truth segmentation at multiple layers in the critic, forcing both the segmentor and critic to learn hierarchical features that capture long- and short-range spatial relationships between pixels. Using the same loss function for both $S$ and $C$, the training of SegAN is end-to-end and stable.

\section{Conclusions}
%
In this paper, we propose a novel end-to-end Adversarial Network architecture, namely SegAN, with a new multi-scale loss for semantic segmentation. Experimental evaluation on the BRATS brain tumor segmentation dataset shows that the proposed multi-scale loss in an adversarial training framework is very effective and leads to more superior performance when compared with single-scale loss or the conventional pixel-wise softmax loss.

As a general framework, our SegAN is not limited to medical image segmentation applications. In our future work, we plan to investigate the potential of SegAN for general semantic segmentation tasks.

\section{Acknowledgements}
This research was supported in part by the Intramural Research Program of the National Institutes of Health (NIH), National Library of Medicine (NLM), and Lister Hill National Center for Biomedical Communications (LHNCBC), under Contract HHSN276201500692P.

\bibliographystyle{spbasic} \bibliography{references}

\end{document}